\newtheorem{lemma}{Lemma}
\def\BibTeX{{\rm B\kern-.05em{\sc i\kern-.025em b}\kern-.08em
    T\kern-.1667em\lower.7ex\hbox{E}\kern-.125emX}}
\begin{document}

\title{CHOMET: Conditional Handovers via Meta-Learning}

\author{Michail Kalntis, Fernando A. Kuipers, and George Iosifidis \vspace{1mm}}

\affil{Delft University of Technology, The Netherlands \\ Emails: \{m.kalntis, f.a.kuipers, g.iosifidis\}@tudelft.nl}


\maketitle

\begin{abstract}
Handovers (HOs) are the cornerstone of modern cellular networks for enabling seamless connectivity to a vast and diverse number of mobile users. However, as mobile networks become more complex with more diverse users and smaller cells, traditional HOs face significant challenges, such as prolonged delays and increased failures. To mitigate these issues, 3GPP introduced conditional handovers (CHOs), a new type of HO that enables the preparation (i.e., resource allocation) of multiple cells for a single user to increase the chance of HO success and decrease the delays in the procedure.
Despite its advantages, CHO introduces new challenges that must be addressed, including efficient resource allocation and managing signaling/communication overhead from frequent cell preparations and releases. This paper presents a novel framework aligned with the O-RAN paradigm that leverages meta-learning for CHO optimization, providing robust dynamic regret guarantees and 
demonstrating at least 180\% superior performance than other 3GPP benchmarks in volatile signal conditions.

\end{abstract}

\begin{IEEEkeywords}

Handover, Conditional Handover, Online Learning, Meta-Learning, Network Optimization.
\end{IEEEkeywords}

\IEEEpeerreviewmaketitle

\section{Introduction}\label{sec:intro}

\subsection{Motivation}

\IEEEPARstart{T}{he} heterogeneity of modern mobile networks, characterized by multiple radio access technologies, dense cell deployments, and high frequencies, introduces significant challenges in mobility management that might lead to increased delays, service interruptions, and Quality of Service (QoS) degradation \cite{kalntis_imc24}. The handover (HO) mechanism is key to addressing these issues by enabling smooth transitions between cells as users move and stream, browse, text, or work \cite{3gpp_38_300}.
Although \textit{traditional} HO mechanisms have existed for decades, it has been shown that, even in 
new radio access technologies (5G and beyond), HOs still face many challenges. The reason lies in the HO procedure itself, as it relies on \textit{reactive} decision-making: a HO is initiated only when the signal quality of the serving cell deteriorates significantly \cite{3gpp_38_331}. This approach often leads to increased HO delays and failures (HOFs), particularly in dense and high-frequency environments where signal conditions fluctuate rapidly \cite{kalntis_imc24, stanczak22}.

To mitigate these issues, 3GPP recently introduced Conditional Handovers (CHOs) \cite{3gpp_38_300}. The main idea lies in reserving resources (e.g., physical resource blocks and spectrum) to multiple \textit{candidate} cells for a single user \textit{proactively}, namely, when signal conditions are still favorable (in contrast to traditional HOs that happen when conditions are poor), and offloading the final HO decision to the user.
The novel idea of \textit{network-configured, user-decided HOs}, in contrast to the traditional \textit{network-configured, user-assisted} HOs, has recently been shown to significantly reduce HO delays and HOFs, rendering them a promising solution in different environments, such as 5G beamformed
\cite{iqbal_beamforming23} or non-terrestrial networks \cite{leo_juan22}. Nevertheless, CHOs introduce new challenges that must be carefully studied to ensure seamless and optimal mobility management in heterogeneous cellular networks.
Specifically, the key difficulty lies in determining which cells to prepare while signal conditions vary, as preparing too many leads to resource waste and increased \textit{signaling} due to extra communications, while too few might lead to traditional HO \cite{prado_CHOopt23, stanczak22, iqbal22, martikainen18}.

Figure \ref{fig:CHO_scenarios} illustrates these challenges: if at the intersection of the three cells (i.e., at $t_3$), we prepare only $C_2$ ($C_3$) but the user moves towards $C_3$ ($C_2$), then $C_2$ ($C_3$) resources will be wasted; at $t_4$, extra signaling/communication will be needed to release these resources and a traditional HO will occur. Preparing both $C_2$ and $C_3$, if they have enough capacity, is beneficial to the user, and soon after (i.e., at $t_4$) the trajectory becomes clear and a CHO is executed, the resources allocated to the unneeded cell can be released, creating, however, some signaling cost. Ideally, if only $C_2$ ($C_3$) is prepared at $t_3$ and the user's movement towards $C_2$ ($C_3$) is confirmed at $t_4$, the CHO can be executed efficiently, ensuring optimal resource allocation with minimal signaling overhead.




\begin{figure}[t]
    \centering
    \includegraphics[width=0.65\columnwidth]{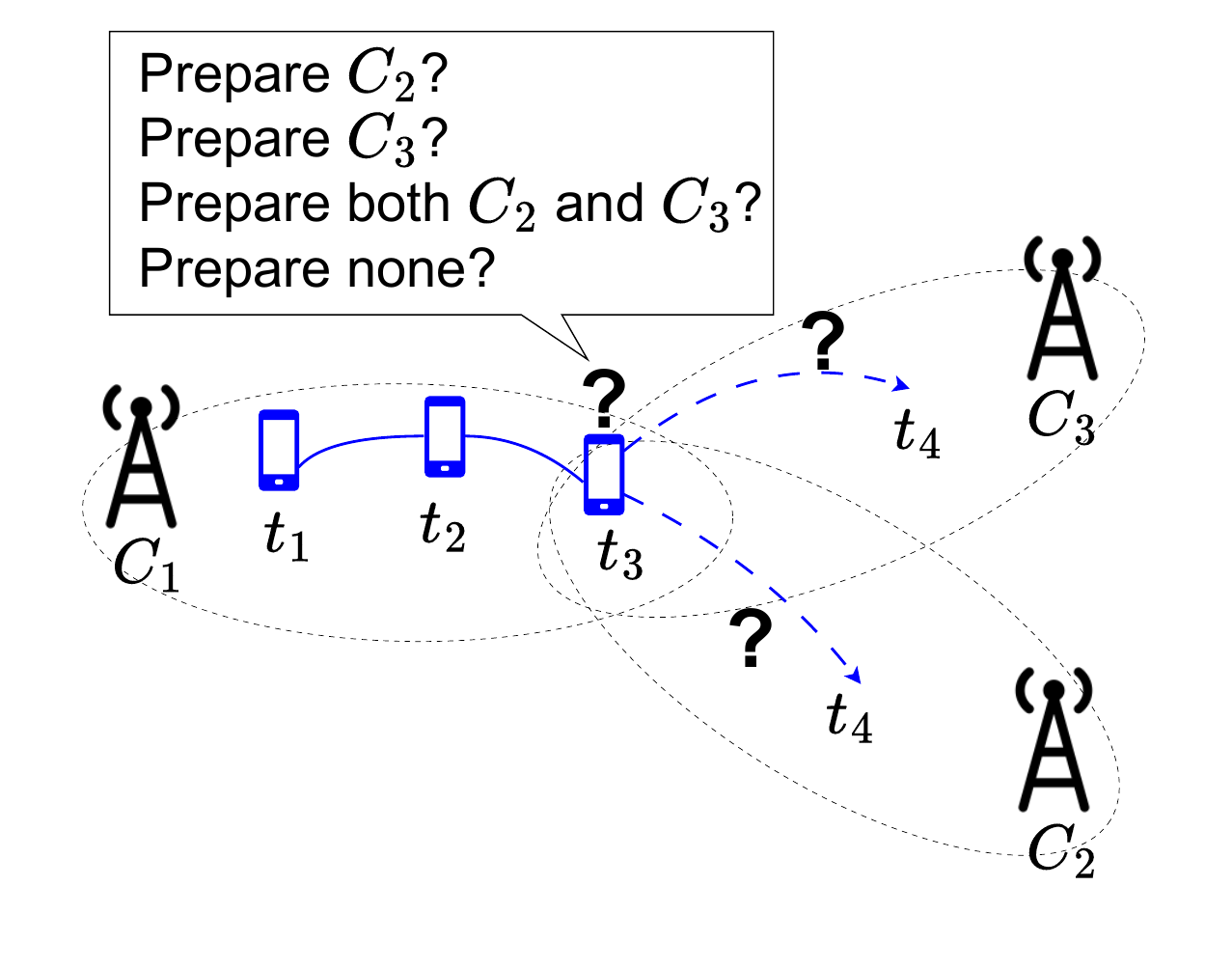}
    \vspace{-7mm}
    \caption{CHO candidate cells for a moving user in consecutive times.}
    \vspace{-8mm}
    \label{fig:CHO_scenarios}
\end{figure}

The goal of this paper is to design a \textit{robust} algorithm that decides which cells to prepare in order to increase the rate of the users and reduce the traditional HO possibility (which comes with increased HOFs and delays), while at the same time minimizing the signaling needed for cell preparations and waste of resources. Robustness in this context refers to maintaining reliable performance without access to accurate information about the signal quality of the users at all times or their mobility patterns.

\subsection{Methods \& Contributions}

After demonstrating the key limitations of traditional HOs and the promising capabilities of CHOs, we introduce a realistic system model that works in time slots, in accordance with online convex optimization (OCO) \cite{hazan-book}, and captures the impact of preparation decisions on both users (UEs) and cells. We consider a central network controller that focuses on providing UEs with high rates and cells with saving their scarce resources, as well as reducing signaling overheads. Similar to other works \cite{prado_CHOopt23}, we provide tunable parameters that prioritize these different CHO criteria based on the preferences of a Mobile Network Operator (MNO). However, for the first time in CHO, ($i$) these parameters can change over time if preferences change, and can be different for each user and cell (e.g., different slices \cite{slicing17}), and ($ii$) assumptions for access to future signal measurements are dropped \cite{kalntis_tcom24}, rendering our modeling applicable to real-world scenarios.

Building on this modeling and problem formulation, we propose a learning algorithm, \texttt{CHOMET}, that updates its preparation decisions dynamically. To ensure \texttt{CHOMET} is adaptive to fast-changing and unpredictable signal conditions, multiple similar (but with different parameters) learners are employed, as well as a meta-learner that tracks and combines their performances, ensuring the best-performing ones are selected. To account for the signaling cost associated with frequent cell preparations and releases (i.e., decision changes across consecutive time slots), we leverage Smoothed Online Learning \cite{zhang-smoothed-ol}. Following a rigorous theoretical analysis, we compare \texttt{CHOMET} through dynamic regret with an omniscient benchmark that has full information on future signal qualities, ensuring that their gap diminishes with time.
Given that the operations of our algorithm are relatively lightweight, it can be implemented in the \textit{near-real-time} of O-RAN \cite{o-ran-andres}, similarly to, e.g., \cite{kalntis22, kalntis_tcom24}. We conclude our study with rigorous experimental evaluations in both ``easy'' (stationary) and volatile scenarios of signal conditions, showing that \texttt{CHOMET} outperforms existing 3GPP-compliant mechanisms.
The key contributions can be summarized as follows:


\noindent$\bullet$ We model, for the first time, CHO as a \textit{smoothed online learning} problem, where the signaling/switching cost can even change over time.

\noindent$\bullet$ We propose a meta-learning algorithm that adjusts cell preparations dynamically, maximizing UEs' rate and minimizing resource waste and signaling overheads. Our algorithm provides strong theoretical dynamic regret guarantees even under volatile signal conditions.

\noindent$\bullet$ We evaluate our algorithm in volatile and slow-changing signal conditions against 3GPP-compliant benchmarks, finding more than 180\% performance improvement.



\section{Background}\label{sec:background}


\subsection{Traditional \& Conditional HOs}

Traditional HOs occur when the signal from the serving cell falls below a threshold (A2 event) or when the signal from a neighboring cell becomes offset better than the signal from the serving cell (A3 event) \cite{3gpp_36_331, 3gpp_38_331, 3gpp_23_401}.
This \textit{reactive} approach, where HOs are initiated \textit{after} the signal conditions are getting/remaining poor for a predefined time, leads to high HO delay and HOF rates, particularly due to the high small cell density and the rapid signal degradation of FR2 (i.e., frequency ranges $>$ 24 GHz) \cite{3gpp_consecutive_CHO, kalntis_imc24}. Specifically, and as can be seen in Fig. \ref{fig:hof_theory}, it is common for HOFs to occur when a user attempts to send a measurement report (MR) under deteriorating signal conditions; or even if the MR is successfully sent, the worsening signal conditions may prevent the user from receiving the subsequent HO command. 

\begin{figure}[!t]
    \centering
    {\label{fig:CHO_HOF_causes}\includegraphics[scale=0.3]{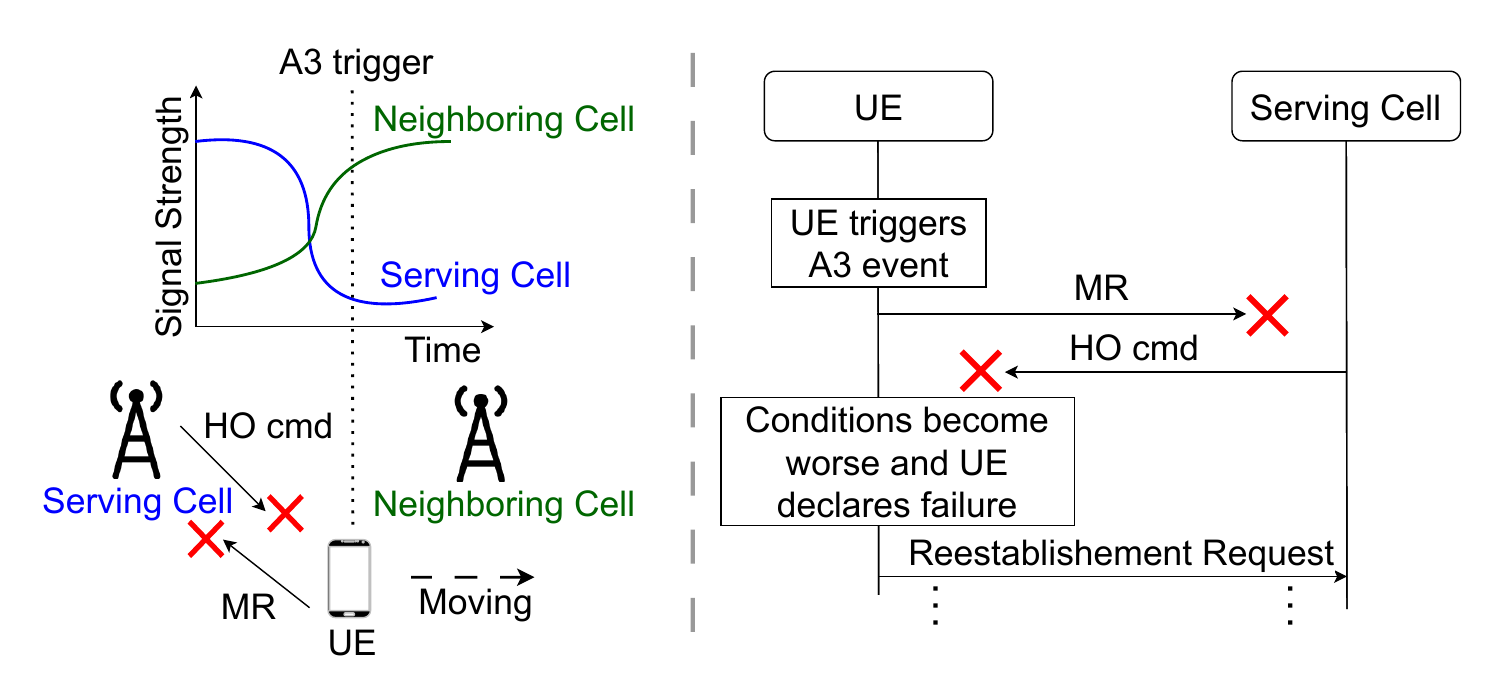}}	
    \vspace{-4mm}
    \caption{A failed traditional HO (left) and its event timeline (right), where the MR or HO command may not reach the serving cell or UE, respectively.}
    \vspace{-8mm}
    \label{fig:hof_theory}
\end{figure}

CHOs, designed as part of 3GPP Release 16 and enhanced in subsequent releases, address the limitations mentioned above by offloading part of the HO decision-making to the user, before the signal conditions deteriorate \cite{3gpp_38_300}. As can be seen in Fig. \ref{fig:CHO_basic_steps}, a CHO decision is taken while the signal quality is still adequate (step 2), and the source cell can pre-configure multiple \textit{candidate} target cells (steps 3, 4, 5) based on the MR sent from the UE (step 1). To conclude the \textit{preparation phase} for the CHOs, the source cell provides monitoring conditions (step 6), such as hysteresis and offset parameters, and the \textit{execution phase} starts: the user applies these conditions continuously (step 7) to evaluate the signal strength and quality --e.g., through signal-to-interference-plus-noise ratio (SINR)-- of the source and candidate target cells. 

If any of the predefined conditions are met (step 8), the UE directly executes the stored HO command (as if this command was just received) without needing to send a MR and await a reply from the source cell: procedures that could delay or/and fail due to the degradation of the signal. If more than a single cell meets the execution condition, the UE decides which cell to access. In our modeling (see Sec. \ref{sec:system_model}), we assume that this cell is the one with the highest SINR adjusted by a cell-specific threshold; however, it is straightforward to consider otherwise. In addition, the UE is not allowed to try the HO execution multiple times or to execute several HO attempts in parallel, towards different candidate cells \cite{stanczak22}. CHO is finalized in the \textit{completion phase}, where resources are released from the source cell, and a new path is established to the target cell (steps 9, 10, and after). We highlight that once the resources are reserved for a UE through the admission control (step 4), they are kept until a cancellation is sent (step 10b).

\begin{figure}[t]
    \centering
    \includegraphics[scale=0.2]{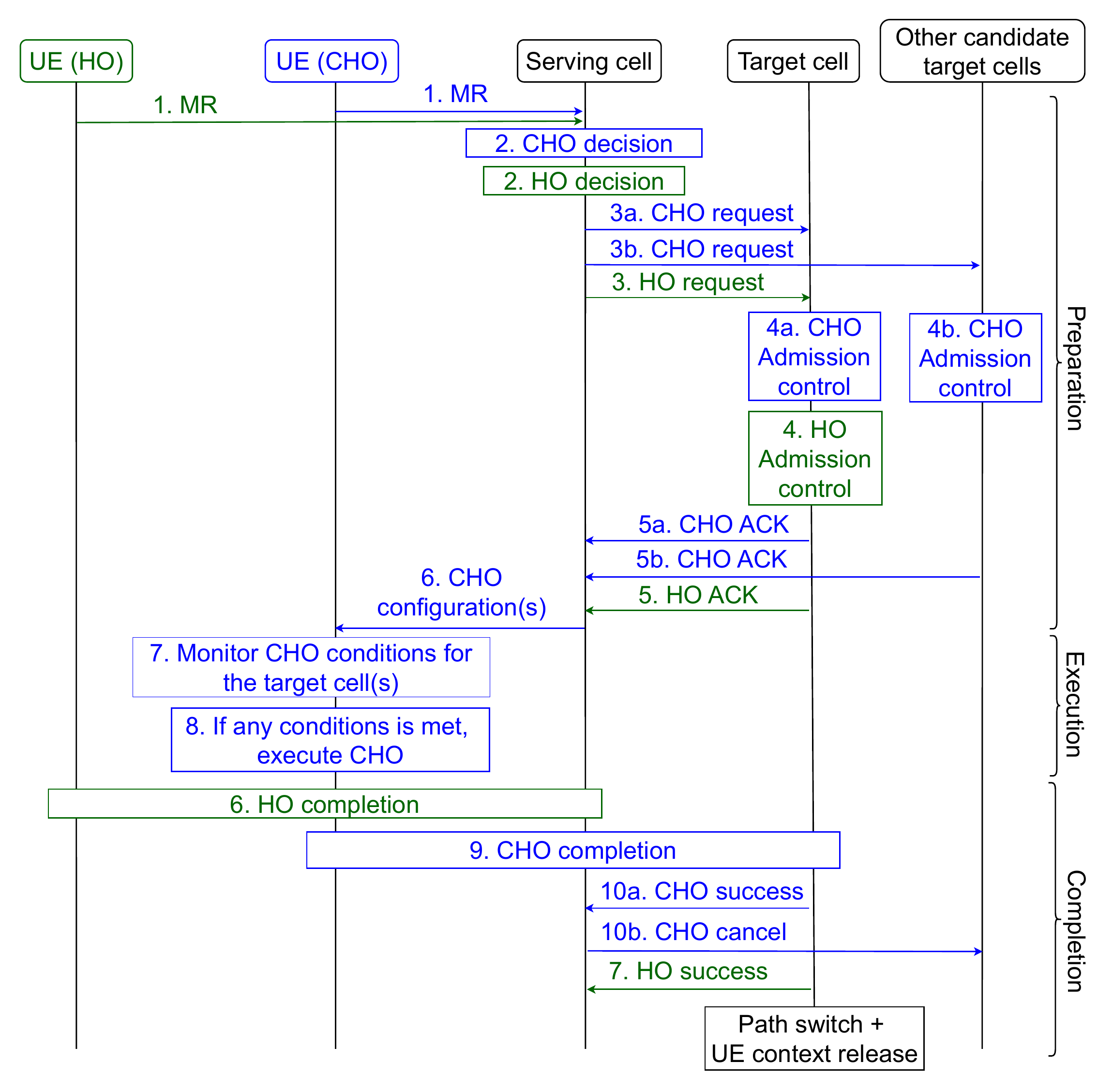}
    \vspace{-3mm}
    \caption{Basic steps/procedures of UEs performing traditional (green) and conditional (blue) handover.}
    \vspace{-6mm}\label{fig:CHO_basic_steps}
\end{figure}

Conversely, in traditional HO, the HO decision (step 2) is made when the signal conditions, included in the MR (step 1), are deteriorating. Moreover, the HO decision in traditional HOs dictates the \textit{one} target cell that the UE should connect to; and this transition (steps 6, 7) is executed \textit{immediately} after the UE receives the command.
In other words, the execution phase in traditional HOs occurs immediately after preparation, in contrast to CHOs where the gap between the two phases can be as large as 9--10 sec. \cite{iqbal22}. 

\subsection{CHO Key Trade-Offs \& Selecting Criteria}\label{sec:system_model_tradeoffs}

Determining which cells to prepare is critical in CHOs for balancing resource efficiency and mobility robustness. Ideally, the MNO, whose goal is to save the scarce and valuable resources of its cells, would allow the preparation of the single cell to which the UE will connect in the next time slot and whose signal strength is high, thus, maximizing the UEs' rate. Since this cannot always be predicted with certainty because, e.g., the user is located at the edge of a cell and/or its trajectory is not known, multiple candidate cells are often prepared. These candidate cells are typically selected based on signal quality metrics (e.g., SINR), which have to exceed a predefined threshold for a specific time \cite{stanczak22} (these thresholds are lower than those for traditional HO). This approach, while straightforward and easy to implement, can lead to inefficient resource usage, as resources of prepared cells cannot be used by another UE until they are released.

At the same time, a long list of prepared cells does not \textit{necessarily} increase the likelihood of including the ``correct'' target cell (i.e., the cell that the user will connect in the next time slot), especially if all prepared cells exhibit low signal quality. Conversely, while preparing fewer cells may conserve resources, it increases the risk of traditional HOs (and thus, more HOFs) to occur, if the correct target cell is not included in the prepared list; preparing no cells leads to traditional HOs. 

Another aspect to consider for optimizing CHOs is the signaling cost of preparing cells. For instance, in environments such as FR2, the small cell density and rapid signal fluctuations result already in more frequent HOs and HOFs, which contribute to significant signaling overhead. The additional burden of continuously preparing and releasing cells due to constant signal variations further exacerbates this overhead \cite{iqbal22, iqbal23_FCHO_hand_blockage}. For that reason, while initial 3GPP releases state that UEs should release CHO candidate cells after (any) successful HO completion to save resources \cite{3gpp_38_300} (see also Fig. \ref{fig:CHO_basic_steps}), follow-up works suggest that doing so is not always optimal in terms of reducing HOFs and signaling overhead \cite{3gpp_consecutive_CHO}.

These trade-offs emphasize that the selection process for cell preparations should focus on finding the balance (and sometimes prioritizing) among identifying the ``best'' --in terms of signal quality-- cells, while keeping the number of prepared cells as small as possible to save resources and avoid frequent preparations and releases to reduce signaling overheads.

\section{System Model \& Problem Formulation}\label{sec:system_model}

\subsection{Modeling Components}

We consider a heterogeneous cellular network comprising a set $\mathcal J$ of $J$ cells serving a set $\mathcal I$ of $I$ UEs. We assume that a central network controller takes decisions for multiple UEs/cells in a time-slotted manner for a set $\mathcal T$ of $T$ slots \cite{andrews-association, kalntis_tcom24} with 10msec--1sec duration for each slot, as realized with xApps in \textit{near-real-time} (10msec--1sec granularity) of O-RAN\cite{o-ran-andres}.
The key metric is the SINR for the signal delivered by cell $j \in \mathcal{J}$ to user $i \in \mathcal{I}$ in slot $t \in \mathcal{T}$:
\begin{align}
	s_{ij}(t)=\frac{q_j\phi_{ij}(t)}{	W_j\sigma^2 + \sum_{k\in \mathcal{B}_j} q_k\phi_{ik}(t)},
\end{align}
where $q_j$ is the transmit power of cell $j$, $\mathcal B_j$ the set of cells that operate in the same frequency as $j$, $\phi_{ij}(t)$ the channel gain (including pathloss, shadowing, and antenna gains), $W_j$ is the bandwidth of cell $j$, and $\sigma^2$ the power spectral density. In line with previous works \cite{andrews-association,kelleler-jsac23, andrews-globecom21, choi-TWC15}, $s_{ij}(t)$ is the average SINR in the slot $t$, since UEs report multiple SINRs during each slot. The maximum possible rate for UE $i$, if it uses cell $j$ exclusively, can be expressed as $c_{ij}(t)=W_j\log_2\big(1+ s_{ij}(t)\big)$. Moreover, we introduce the random variable $u_{j}(t) \in [0,1]$ to show the part of the bandwidth $W_j$ that is available.

We denote by $x_{ij}(t)\!\in\!\{0,1\}$ the preparation decision: $x_{ij}(t)\!=\!1$ means that cell $j$ is prepared in slot $t$ for user $i$, and $x_{ij}(t)\!=\!0$ otherwise. Also, we define the vector $\bm x_t\!=\!\big(x_{ij}(t) \!\in\! \{0,1\}, i\in\mathcal I, j\in\mathcal J\big)$, with the decision set: 
\begin{align*}
	&\mathcal X=\bigg\{ \bm x \in \{0,\!1\}^{I\cdot J}  \  \Big |   \sum_{j\in \mathcal J}{x}_{ij} \leq J,  i\in \mathcal I \bigg\},
\end{align*}
and its convex hull $\mathcal X^{c}\!=\!\text{co}(\mathcal X)$ that relaxes the integrality, i.e., $\bm x \! \in \! [0,1]^{I\cdot J}$. 
For $p_{ij}(t)$ being the probability that cell $j$ will be the highest-rate cell for UE $i$ during slot $t$, then this CHO will be realized with probability $p_{ij}(t) x_{ij}(t)$, where $\bm p_t\!=\!\big(p_{ij}(t) \!\in\! \{0,1\}, i\in\mathcal I, j\in\mathcal J\big)$ and $p_{ij}(t) = \mathds{1}\!\{ j = \arg\max_{k} \big(s_{ik}(t) - o_k(t)\big)\}$, with $o_j(t)$ being the cell-specific offsets that can vary in each slot $t$, resembling the offsets used in, e.g., the A3 event \cite{3gpp_36_331, 3gpp_38_331, martikainen18} to avoid often HOs/CHOs. It is important to note that a user can be served from only one cell at a time, which is inferred from the definition of $\bm p$.

\subsection{Problem Statement}

As mentioned earlier, CHOs are \textit{network-configured, user-decided HOs}. Thus, it is the network controller that decides the preparations of the cells based on its belief regarding the cell that will be the best for each UE, while minimizing resource waste and signaling overheads; and each UE is allowed to execute the CHO directly to one of these prepared cells. Otherwise, a traditional HO occurs, which incurs additional costs due to higher HO delays and HOF probabilities.

With these in mind, we introduce the \textit{utility} function that the network controller wishes to maximize as follows:

\begin{align*}
    g_t(\bm x) \triangleq \sum_{i=1}^I \sum_{j=1}^J & \Big(x_{ij}(t) p_{ij}(t) u_{j}(t) \log c_{ij}(t) \\ &
- \beta_t \ x_{ij}(t) \big(1-p_{ij}(t)\big) \\ &
- \gamma_t \ p_{ij}(t) \big(1-x_{ij}(t)\big)\Big), \bm x \in \mathcal X.
\end{align*}

\noindent The first term of $g_t$ defines the rate of a user, which is non-zero only for its served cell and is discounted by $u_j$ due to exogenous (i.e., independent of the preparation decisions) effects; e.g., other UEs that executed traditional HO. The logarithmic transformation balances the sum-rate across all users to achieve fairness \cite{andrews-association}; however, we note that other mappings (e.g., linear) can be used to capture the specifics of different applications. The second term of $g_t$ introduces a penalty if cells other than the highest-rate ones are prepared to reduce resource waste, and the third term represents an additional cost if the highest-rate cell is not prepared. The scalarization parameters $\beta_t \text{ and } \gamma_t$ are used to normalize units and prioritize one criterion over the other according to the preferences of each MNO that can even change over time or be different for each UE-cell pair due to different slices \cite{slicing17}.

At the same time, the goal of the network controller is to minimize the signaling/switching overheads (or, similarly, maximize their negation), which are captured using $-\delta_t \ \| \bm x_t-\bm  x_{t-1}\|_{B_t}$, with $\bm x_t, \bm x_{t-1} \!\!\in\!\! \mathcal X$. This presents the \textit{switching cost} induced by the signaling of preparing and releasing cells (see Fig. \ref{fig:CHO_basic_steps}) scaled by $b_{ij}(t)$, as each cell $j$ might have different costs for preparing and releasing cells for each user in slot $t$ due to fluctuating traffic demands. More precisely, we define with $\bm B_t\!=\!\text{diag}(\bm b_n(t)\!>\!0)$ a positive definite matrix which has on its diagonal the signaling weights $b_n(t) \in [0,1]$, $n\!=\!i\cdot j$ when UE $i$ prepares and releases cell $j$, and $\|\cdot\|_{B_t}$ is its induced norm that can change over time $t$, i.e., $\|\bm{x}\|_{B_t}^2\!=\! \sum_{n}\! b_n(t) x_n^2$ and its dual $\|\bm{x}\|_{B_t*}^2\!=\!\sum_{n}\!  x_n^2/b_n(t)$ \cite{beck-book}. The role of the parameter $\delta_t$ is similar to $\beta_t \text{ and } \gamma_t$. Thus, the overall problem that the network controller wishes to solve can be expressed as:

\vspace{-4mm}

\begin{align*}
\mathbb{P}: & \max_{\{\bm{x}_t\}_{t}} \sum_{t=1}^T \Big(g_t(\bm x_t) -\delta_t \ \| \bm x_t-\bm  x_{t-1}\|_{B_t} \Big) \\ & 
\textrm{s.t.} \quad \bm{x}_t\in\{0,1\}^{I\cdot J}, \ \forall t \in \mathcal T,
\end{align*}


\noindent where $g_t(\bm x_t) -\delta_t \ \| \bm x_t-\bm  x_{t-1}\|_{B_t}$ is the \textit{objective} function. Solving the problem at the beginning of the horizon $T$ is challenging due to several factors. First, the controller at $t=1$ has no knowledge of the future SINR values for each UE (it has no access or effect on how the UEs will move in the future). In fact, SINRs remain unknown even at the beginning of each slot, as knowing the current SINR and deciding afterwards the cell preparations lead to trivial solutions\footnote{A similar way to approach the problem would be to decide the cell preparations in the previous slot, and then access the SINRs at the beginning of each slot, see \cite{prado_CHOopt23}.}. In other words, the cell preparations should be decided based on the history (one or multiple slots) of the UE's signal, a problem that is further perplexed by the discreteness of the preparation variables \cite{lesage-tac-21}. What is more, the signaling overhead depends on the change in preparation state, as a cell that remains prepared does not incur additional signaling. This introduces a memory effect into the system, as past decisions influence current preparation decisions (due to signaling costs). Finally, we highlight that the $B_t$--norm implies that the switching costs change over time; however, this cost in time $t$ does not change based on previous preparation decisions $\{\bm x_\tau\}, \tau = 1,..., t-1$. 

Therefore, our goal is to design an algorithm that is oblivious to the time-varying and unknown parameters and maximizes the users' rate, while keeping the signaling costs and the amount of wasted resources to a minimum.
\section{Algorithm Design}\label{sec:algorithm_CHOMET}

We approach $\mathbb{P}$ as a \emph{smoothed online learning} problem and address it through \emph{meta-learning} based on the \emph{experts} framework \cite{warmuth-experts, hazan-meta}. More precisely, and due to the problem's unknown variability where optimal preparation decisions can change drastically between slots (due to the variability of the signal conditions), we deploy $K$ similar learning agents, known as \emph{experts}, but each with a different learning step $\theta=(\theta_k, k\in\mathcal K)$ (to ensure at least one will perform well), and a meta-learner whose goal is to find the best-performing expert(s) on-the-fly, while signal conditions might vary. 

More precisely, at the beginning of each slot $t, \text{with } t=1, ..., T$, each expert $k$ shares its suggestion $\bm{x}_t^k \in \mathcal X$ regarding which cells should be prepared for each UE; we assume $x_0^k =0, k \in \mathcal K$. The meta-learner, which has assigned weights (i.e., ``trust'') $\bm{w}_t\!=\!(w_t^k, k\in \mathcal K)$, with $\bm w_t^\top \bm{1}_K\!=\!1$ to each expert, synthesizes them as follows:

\vspace{-4mm}

\begin{align}
\bm x_t^m=\sum_{k\in\mathcal K}w_t^k\bm{x}_t^k.\label{eq:meta-mix}
\end{align}

\noindent To ensure that the decision $\bm x_t^m$ of the meta-learner is implementable (i.e., binary decision variable), we use a quantization function $Q_{{\mathcal X}}$ implementing any unbiased sampling technique, outputting $\bm x_t \in \mathcal X$; here, we opt for Madow sampling \cite{madow}. We underline that, for $\bm{x} \in \mathcal X^c$, $g_t$ is concave (as a linear function), and subtracting the $B_t$--norm preserves the property.

Once cell preparations are finalized and implemented, the controller observes the actual signal qualities from this slot $t$, $s_{ij}(t), \forall i \in \mathcal I \text{ and } j \in \mathcal J$ (we fill with zero the cells that are unreachable for each UE), and calculates the gradient of only the utility function for the ``synthesized'' decision (and not the decision of each expert $\bm x_t^k$), $\nabla g_t(\bm x_t)$, which is sent to all experts. Then, the meta-learner uses the surrogate (i.e., partially linearized) loss:

\vspace{-4mm}

\begin{align}\label{loss-function}
\ell_t(\bm x_t^k)=\langle{\nabla g_t(\bm x_t) },{\bm x_t^k - \bm x_t}\rangle -\delta_t \ \| \bm x_t-\bm  x_{t-1}\|_{B_t}
\end{align}

\noindent to update the weights:

\begin{align}\label{experts-weights-update}
    w_{t+1}^k=\frac{w_{t}^ke^{\eta\ell_t(\bm x_t^k)}}{\sum_{k\in\mathcal K} w_{t}^ke^{\eta \ell_t(\bm x_t^k)} }.
\end{align}


\noindent Lastly, experts update their choices via online gradient ascent:
\begin{align}\label{expert-update}
	\bm{x}_{t+1}^k=\Pi_{\mathcal X}\Big( \bm{x}_t^k + \theta_k\nabla g_t(\bm x_t)\Big),
\end{align}

\noindent where $\eta$ is the meta-learning step. A summarization of the steps can be seen in Algorithm \ref{alg:chomet} (\texttt{CHOMET}).


\begin{algorithm}[t] 
	{\small{	
			\nl \textbf{Required}: Step $\eta$ for meta-learner and $\{\theta_k\}_{k\in \mathcal K}$ for experts \\ 
			\nl \textbf{Initialize}: Sort $\theta_1\leq \theta_2\leq \ldots \leq \theta_K$ and set ${w}_1^{k}=\frac{1+(1/K)}{k(k+1)}, \forall k\in \mathcal K$ \\
			\nl \For{ $t=1,2,\ldots, T$  }{
				\nl Each expert $k\in \mathcal K$ shares its decision $\bm x_t^k$\\
				\nl  The controller synthesizes  $\bm x_t^m$ using \eqref{eq:meta-mix}\\
				\nl  The controller implements $\bm{x}_t=Q_{{\mathcal X}}(\bm x_t^m)$\\		
				\nl The controller sends $\nabla g_t(\bm x_t)$ to experts \\
				\nl The controller updates its weights using \eqref{experts-weights-update} \\							
				\nl Each expert updates its decision using \eqref{expert-update} \\							
			}
			\caption{{\small{\underline{\smash{CHO}} via \underline{\smash{Met}}a-Learning (\texttt{CHOMET})}}}\label{alg:chomet}
	}}	
	\setlength{\intextsep}{0pt} 
\end{algorithm} 

The performance of the algorithm will be assessed using the \emph{Expected Dynamic Regret}, defined as in \cite{zhang-smoothed-ol}:
\begin{align}
\!\!\!\!\mathbb{E}\left[{\mathcal R}_T\right]\!\triangleq \!\sum_{t=1}^T g_t(\bm x_t^{\star}) \!-\! \sum_{t=1}^T \mathbb{E}\Big[ g_t(\bm{x}_t) \!-\!\delta_t \| \bm x_t\!-\!\bm  x_{t-1}\|_{B_t}\Big], \label{regret-metric}
\end{align}
where $\{\bm{x}_t\}_t$ is the algorithm decisions, $\{\bm x_t^{\star}\}_t$ is the benchmark that is the solution of $\mathbb{P}$, and the expectation captures the randomization due to the quantization $Q_{\mathcal X}$. Our goal is to design an algorithm that ensures this gap diminishes with time, $\lim_{T\rightarrow \infty} \mathbb{E}[{\mathcal R}_T]/T\!=\!0$ for any possible benchmark sequence $\{\bm x_t^{\star}\}_t$. In other words, we compare our algorithm with the best oracle that has full information on SINR for all users and cells at the beginning and for the entire horizon $T$. This serves as a highly competitive benchmark, exceeding the static (single best solution for all time slots), or dynamic one (best solution for each time slot independently).

To proceed, we observe that from the problem formulation (see Section \ref{sec:system_model}), the gradients of all functions are bounded by $G$, i.e., $\|\nabla g_t(\bm x)\|_2 \!\leq\! G$, and the diameter of the domain is also bounded by D, i.e., $\|\bm x\!-\!\bm x'\|_2 \!\leq\! D$, where $D = \sqrt{I J}$ and $G=\sqrt{I(J-1) \beta_{\max}^2 + I(\log c_{\max} + \gamma_{\max})^2}$, respectively, with $\beta_{\max} = \max_{t \in \mathcal T} \{\beta_t\}$, $\gamma_{\max} = \max_{t \in \mathcal T} \{\gamma_t\}$, $c_{\max} = \max_{t \in \mathcal T} \{c_t\}$. Therefore, defining $b_{\max}=\max_{n\leq I\cdot J}\{b_n(t)\}$ for $t \in \mathcal T$, it holds that $\|\bm x\!-\!\bm x'\|_{B_t} \!\leq\! D\sqrt{b_{\max}} \!\triangleq \! D_B$, $\|\bm x\!-\!\bm x'\|_{B_t*} \!\leq\! D/\sqrt{b_{\max}}\!\triangleq \! D_{B*}$, and $\|\nabla g_t(\bm x)\|_{B_t} \!\leq G \sqrt{b_{\max}} \!\triangleq\!G_B$.

\begin{lemma}[Performance Analysis / Optimality Guarantee]\label{lemma} Using the parameters:
\begin{itemize}
	\item	$K=\left\lceil\log_2\sqrt{1+\!2T} \right \rceil+1$,

	\item	$\theta_k=2^{k-1}\sqrt{\frac{D_B^2}{T(G^2+2G_B)}}$,  $k=1,\ldots,K$,
				
	\item	$\eta=1/\sqrt{T\nu}$, with $\nu\triangleq (D_B\!+\!1/8)(GD\!+\!2D_B)^2$,	

    \item $P_T\!=\!\sum_{t=1}^T \!\!\|\bm{x}_t^{\star}\!-\bm{x}_{t-1}^{\star}\|_{B_t}$ (path length),
\end{itemize}	
the discrete decisions $\{\bm x_t\}_t$, where $\bm x_t\in \mathcal{X}$ ensure:
\begin{align}
\mathbb{E}\left[{\mathcal R}_T \right]\! \leq \! \notag \sqrt{T}\Big( & \sqrt{\nu}\left(1+\ln(1/w_1^k)\right) + \\ & \sqrt{(G^2+2G_B)(D_B^2+2D_{B*}P_T)} \ \Big) \notag + \\ & \hspace{-6mm} T(G+\sqrt{b_{\max}})\sqrt{IJ}/2 \label{eq:lemma_res},
\end{align}
\end{lemma}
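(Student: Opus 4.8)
The plan is to split the expected dynamic regret \eqref{regret-metric} into three conceptually separate contributions---a \emph{quantization} gap, a \emph{meta-learning} (tracking) gap, and a per-expert \emph{dynamic} gap---and bound each under the stated parameters. First I would exploit the fact that $g_t$ is affine in $\bm x$ (all of $p_{ij}(t),u_j(t),c_{ij}(t),\beta_t,\gamma_t$ are independent of the decision), so that $\nabla g_t(\bm x_t)\equiv\bm a_t$ is a constant vector; consequently the expert iterates \eqref{expert-update}, the weights \eqref{experts-weights-update}, and the mixture $\bm x_t^m$ of \eqref{eq:meta-mix} are all deterministic, and the only randomness is the final Madow draw $\bm x_t=Q_{\mathcal X}(\bm x_t^m)$. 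Using unbiasedness $\mathbb E[\bm x_t]=\bm x_t^m$ together with $\|\bm a_t\|_2\le G$ and the bounded per-coordinate variance of Madow sampling (each coordinate is Bernoulli, so $\mathbb E\|\bm x_t-\bm x_t^m\|_2\le\sqrt{IJ}/2$), I would bound both $g_t(\bm x_t^m)-\mathbb E[g_t(\bm x_t)]$ and the expected switching term $\delta_t\,\mathbb E\|\bm x_t-\bm x_{t-1}\|_{B_t}$ by an $O(1)$ per-slot quantity; summed over $t$ these reproduce the linear term $T(G+\sqrt{b_{\max}})\sqrt{IJ}/2$ of \eqref{eq:lemma_res}. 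It then remains to bound the purely continuous, deterministic regret $\sum_t\big(g_t(\bm x_t^{\star})-g_t(\bm x_t^m)\big)$.

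Second, I would insert a reference expert $k^\star$ and write $\sum_t\big(g_t(\bm x_t^{\star})-g_t(\bm x_t^m)\big)=\sum_t\big(g_t(\bm x_t^{\star})-g_t(\bm x_t^{k^\star})\big)+\sum_t\big(g_t(\bm x_t^{k^\star})-g_t(\bm x_t^m)\big)$. For the second (meta) sum I would use the key observation that in the surrogate loss \eqref{loss-function} the switching term $-\delta_t\|\bm x_t-\bm x_{t-1}\|_{B_t}$ and the baseline $\bm x_t$ are \emph{common to all experts}; hence, writing $\bm\ell_t=(\ell_t(\bm x_t^k))_{k\in\mathcal K}$, affinity and \eqref{eq:meta-mix} give $\ell_t(\bm x_t^{k^\star})-\langle\bm w_t,\bm\ell_t\rangle=\langle\bm a_t,\bm x_t^{k^\star}-\bm x_t^m\rangle=g_t(\bm x_t^{k^\star})-g_t(\bm x_t^m)$. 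Thus the meta sum equals the exponential-weights (Hedge) regret of \eqref{experts-weights-update} against the single arm $k^\star$, which I would control by the standard second-order estimate $\ln(1/w_1^{k^\star})/\eta+\eta\sum_t(\text{range of }\ell_t)^2$. Since $|\ell_t(\bm x_t^k)|\le GD+\delta_tD_B\le GD+2D_B$, the choice $\eta=1/\sqrt{T\nu}$ with $\nu=(D_B+1/8)(GD+2D_B)^2$ balances the two Hedge terms and collapses this to $\sqrt{T}\,\sqrt\nu\,\big(1+\ln(1/w_1^{k^\star})\big)$, the first term of \eqref{eq:lemma_res}.

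Third, for the per-expert dynamic sum I would run the standard online-gradient-ascent dynamic-regret argument in the time-varying $B_t$-geometry. Expanding $\|\bm x_{t+1}^{k^\star}-\bm x_t^{\star}\|_{B_t}^2$ via the non-expansiveness of $\Pi_{\mathcal X}$ and the update \eqref{expert-update}, telescoping, and collecting the drift of the comparator yields a bound of the form $(D_B^2+2D_{B*}P_T)/(2\theta_{k^\star})+\tfrac{\theta_{k^\star}}{2}(G^2+2G_B)T$, where the $G_B,D_{B*}$ terms absorb the effect of the changing norm. Optimizing the free step would give $\sqrt{T}\,\sqrt{(G^2+2G_B)(D_B^2+2D_{B*}P_T)}$; since $P_T$ is unknown a priori, I would instead invoke the geometric grid $\theta_k=2^{k-1}\sqrt{D_B^2/\big(T(G^2+2G_B)\big)}$ with $K=\lceil\log_2\sqrt{1+2T}\rceil+1$, which guarantees that some $k^\star$ has its step within a factor of two of the optimum across the whole admissible range of $P_T$, reproducing the second term of \eqref{eq:lemma_res} up to constants. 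Summing the three contributions and using $\delta_t,\beta_t,\gamma_t$ bounded by their stated maxima finishes the proof.

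I expect the main obstacle to be the per-expert dynamic-regret step under the \emph{time-varying} norm $\|\cdot\|_{B_t}$: the usual telescoping of $\|\bm x_t^{k}-\bm x_t^{\star}\|^2$ relies on a fixed inner product, whereas here both the regularizer geometry and the path length $P_T$ are measured in $B_t$, so the cross terms no longer cancel cleanly and must be reconciled through the equivalence constants $D_B,D_{B*},G_B$ between the $B_t$-norm and its dual. A secondary subtlety is making the single, fixed meta-step $\eta$ adapt to whichever expert $k^\star$ the unknown $P_T$ selects; this is exactly what the non-uniform initialization $w_1^k=(1+1/K)/\big(k(k+1)\big)$ and the logarithmic grid size $K$ are engineered to handle, ensuring $\ln(1/w_1^{k^\star})=O(\ln K)$ so that the meta term remains of order $\sqrt{T}$ up to logarithmic factors.
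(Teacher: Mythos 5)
Your proposal follows essentially the same route as the paper's proof: the same two-level decomposition into a continuous regret for $\bm x_t^m$ (handled by bounding each expert against the benchmark via online gradient ascent in the $B_t$-geometry and then the Hedge meta-learner against the best expert through the surrogate loss \eqref{loss-function}) plus a discretization error controlled by the Lipschitz constant $G+\sqrt{b_{\max}}$ and the per-coordinate variance $1/4$ of Madow sampling, yielding $T(G+\sqrt{b_{\max}})\sqrt{IJ}/2$. The paper defers the continuous part to its cited prior work while you spell it out, and your bookkeeping of the switching cost in the quantization step (bounding $\mathbb{E}\|\bm x_t-\bm x_{t-1}\|_{B_t}$ directly rather than its deviation from $\|\bm x_t^m-\bm x_{t-1}^m\|_{B_t}$) is exactly as informal as the paper's own Lipschitz argument with the frozen point $C=\bm x_{t-1}$, so the two arguments are equivalent in substance and rigor.
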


\begin{proof}
The proof follows by tailoring the main result of \cite{kalntis_infocom25, zhang-smoothed-ol, zhang-18}, which, however, solve the UE-cell association problem. Specifically, after setting w.l.o.g. $\beta_t=\gamma_t=\delta_t=1$, eq. \eqref{regret-metric} can be rewritten as:
\begin{align}
    &\mathbb{E}\left[{\mathcal R}_T\right]= \mathbb{E}\Bigg[\!\sum_{t=1}^T\! \Big(g_t(\bm x_t^{\star}) \!- \big(g_t(\bm x_t) \!-\! \|\bm x_t\!-\!\bm  x_{t-1}\|_{B_t}\big) \notag + \\ & \big(g_t(\bm{x}_t^m)\!-\! \|\bm x_t^m\!-\!\bm  x_{t-1}^m\|_{B_t}  \big) - \big(g_t(\bm{x}_t^m)\!-\! \|\bm x_t^m\!-\!\bm  x_{t-1}^m\|_{B_t} \big)\!\Big) \Bigg] \notag \! = \\ & 
    \sum_{t=1}^T \!\! \Big(\!g_t(\bm x_t^{\star}) \! -\! \big(g_t(\bm{x}_t^m)\!-\! \|\bm x_t^m\!-\!\bm  x_{t-1}^m\|_{B_t} \big)\!\Big) \!\!+\! 
    \mathbb{E}\!\Bigg[\!\sum_{t=1}^T\! \Big(\!\big(g_t(\bm{x}_t^m)-\! \notag \\ & \|\bm x_t^m\!-\!\bm  x_{t-1}^m\|_{B_t}\big) \!-\! \big(g_t(\bm x_t) \!-\! \|\bm x_t\!-\!\bm  x_{t-1}\|_{B_t}\big)\!\Big)\!\Bigg]. \label{eq:prep_and_discr_err}
\end{align}

\begin{figure*}[!t]
    \centering
    \subfigure[]{\label{fig:num_changes_best_x_500-beta_0.1-T_5000}\includegraphics[scale=0.65]{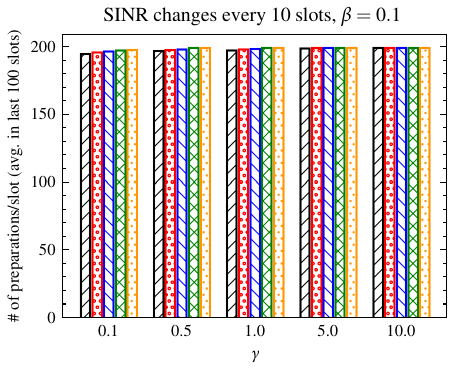}}	
    \hspace{0.5cm}
    \subfigure[]{\label{fig:num_changes_best_x_500-beta_0.3-T_5000}\includegraphics[scale=0.65]{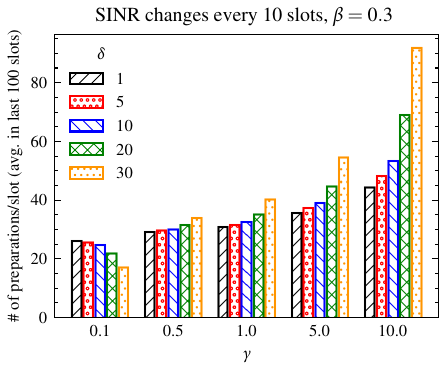}}
    \hspace{0.5cm}
    \subfigure[]{\label{fig:num_changes_best_x_500-beta_0.5-T_5000}\includegraphics[scale=0.65]{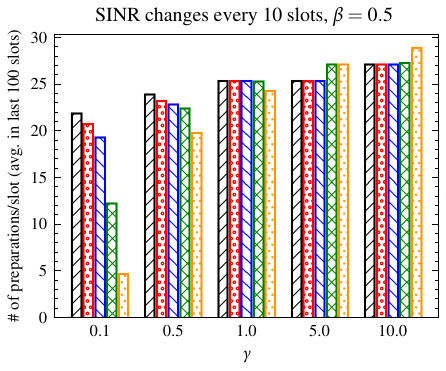}}
    \vspace{-2mm}
    \caption{Effect of parameters $\beta_t\equiv\beta, \gamma_t\equiv\gamma, \text{and } \delta_t\equiv\delta, \text{ for } T=5k$ slots in the volatile scenario (SINR changes ever 10 slots) when running \texttt{CHOMET}: (a) $\beta=0.1$, (b) $\beta=0.3$, and (c) $\beta=0.5$. Y-axis symbolizes the average number (last 100 slots) of preparations per slot.}    
    \vspace{-5mm}\label{fig:num_changes_best_x_500-T_5000}
\end{figure*}

\vspace{-3mm}

Therefore, to characterize the performance of \texttt{CHOMET}, we begin by bounding the expected dynamic regret of the relaxed (continuous) preparation decisions $\{\bm x_t^m\}_t$, $\bm x_t^m \!\!\in\!\! \mathcal{X}^c$, and afterwards, the error of the (implementable) discrete decisions, which correspond to the first and second terms of eq. \eqref{eq:prep_and_discr_err}, respectively. As can be seen from the first term of the result in eq. \eqref{eq:lemma_res}, sublinear dynamic regret can be achieved for the relaxed preparation decisions. It follows by bounding the regret of each expert w.r.t. the benchmark and then the regret of the meta-learner w.r.t. any expert. For the extra cost/error introduced due to the discretization of the preparation decision through the quantization routine $Q_{{\mathcal X}}$, we bound the second term of eq. \eqref{eq:prep_and_discr_err} as follows:
\[\mathbb{E}\!\!\left[ \!\sum_{t=1}^T\!\! \Big(\!g_t(\bm{x}_t^m)\!-\! \|\bm x_t^m\!-\!\bm  x_{t-1}^m\|_{B_t} \!-\!g_t(\bm x_t) \!+\! \|\bm x_t\!-\!\bm  x_{t-1}\|_{B_t}\! \Big)\!\right] \!\!\stackrel{(*)}\leq \]
\[(G+\sqrt{b_{\max}})\sum_{t=1}^T  \sqrt{\mathbb{E}\left[ \sum_{i=1}^I\sum_{j=1}^J\Big({x}_{ij}(t) -\mathbb{E}[{x}_{ij}(t)] \Big)^2\right]} \stackrel{(**)}\leq\]
\[\hspace{-5.2cm}T(G+\sqrt{b_{\max}})\sqrt{IJ}/2,
\]

\noindent leading to eq. \eqref{eq:lemma_res}. Specifically, for $(*)$, we use Jensen's inequality, the linearity of expectation, and the Lipschitz continuity. In particular, for the Lipschitz continuity, considering two points $\bm x_1, \bm x_2 \in \mathcal{X}^c$, using the triangle inequality and the Lipschitz constant $G$ of $g_t$ and of the Euclidean norm, and defining $\bm x_{t-1}=C$, we have:
\begin{align*}
    \Big|&\big(g_t(\bm{x}_1) -  \|\bm{x}_1 - C\|_{B_t} \big) - \big(g_t(\bm{x}_2) - \|\bm{x}_2 - C\|_{B_t} \big)\Big| \leq \\ &
    \big|g_t(\bm{x}_1) - g_t(\bm{x}_2)\big| + \big|\|\bm{x}_1 - C\|_{B_t} - \|\bm{x}_2 - C\|_{B_t}\big|  \leq \\ &
    \big(G+ \sqrt{b_{\max}}\big) \|\bm{x}_1 - \bm{x}_2\|.
\end{align*}

\noindent Lastly, we notice that $(**)$ calculates the variance of the binary $\bm x_t$. Given that $\bm x_t \in \{0,1\}^{I \cdot J}$ and the quantization routine $Q_{\mathcal X}$ implements Madow sampling, the maximum variance each component can obtain is $1/4$ (the variance per element is $q(1-q)$ due to the Bernoulli trial with probability $q$ for each $x_{ij}$, and maximizes for $p=0.5$). Hence, we can upper bound the expression in the square root with the constant $IJ/4$.
\end{proof}

With Lemma \ref{lemma}, we bound the expected regret of the implementable cell preparations, in contrast to previous works \cite{andrews-association, kelleler-jsac23}. Even though the relaxed/continuous preparations achieve sublinear dynamic regret, the discretization introduces an unavoidable non-diminishing error. As has been shown in the UE-cell association context \cite{kalntis_infocom25}, this error is small in practical scenarios, and the algorithm converges towards the optimal solutions (see also Sec. \ref{sec:evaluation}).

\section{Performance Evaluation}\label{sec:evaluation}

We assess \texttt{CHOMET} under different synthetic scenarios to verify its robustness and showcase its learning convergence. For that, we compare our algorithm against the baseline 3GPP-compliant HO/CHO algorithm using the A3 event, which is the algorithm currently being used by MNOs and antenna vendors \cite{3gpp_38_300, 3gpp_38_331}. 
We use the tuple notation \texttt{(\# Best BS, TTT)} to refer to these comparators, where the first argument refers to the number of top-$N$ best cells (i.e., highest SINR) that are prepared for each user in each slot, while the second is the time-to-trigger (TTT, i.e., number of consecutive slots a cell must have been in the top-$N$ before is prepared), resembling the A3 event. For example, algorithm \texttt{(3, 8)} prepares the 3 highest-SINR cells for each user only if these cells remain the highest ones for at least 8 slots. Moreover, we compare \texttt{CHOMET} with an optimal \texttt{Oracle} that solves the optimization problem in every step using \textit{CVXPY} \cite{diamond2016cvxpy}. We note that the comparison with the oracle that has complete knowledge of the future is computationally intensive for mixed-integer programs \cite{kelleler-jsac23, Bragin2022}; however, our dynamic regret guarantees hold, as can be verified from Sec. \ref{sec:algorithm_CHOMET}.

In line with prior works \cite{kalntis_tcom24, kalntis_infocom25}, we select two synthetic scenarios as follows, for $T=5k$ slots: ($i$) \textit{stationary}: the SINR $s_{ij}(t), i \in \mathcal I, j \in \mathcal J$ remains almost constant across all slots $t \in \mathcal T$, changing only once every 600 slots, and ($ii$) \textit{volatile}: $s_{ij}(t)$ fluctuates every 10 slots within the range of [10, 30]dB \cite{3gpp_36_133}, encompassing poor to excellent values. In both scenarios, we randomly select the bandwidths $W_j \in \{5, 10, 15, 20\}$ MHz \cite{3gpp_36_101}, while $\bm B_t$ takes random values within [0, 1], as the offsets $o_j(t), \forall j \in \mathcal J $. We select $I\!=\!20$ UEs and $J\!=\!10$ cells to facilitate the calculation of the average regret, as determining the best oracle is computationally intensive. However, we underline that the best oracle is not needed to run \texttt{CHOMET}.

First, Fig. \ref{fig:num_changes_best_x_500-T_5000} shows the effect of the parameters $\beta_t, \gamma_t, \text{and } \delta_t$ on \texttt{CHOMET}, which, for simplicity, are assumed to be constant for the entire duration of the experiments, namely $\beta_t\equiv\beta, \gamma_t\equiv\gamma, \delta_t\equiv\delta, \text{and } u_t=1, \forall t \in \mathcal T$, where $T=5k$ and SINR changes every 10 slots (volatile scenario). It is important to note that due to the different values that each component of the objective function takes, we focus on the comparison of the values within the same parameter (i.e, they act as scalarization values too). For example, $\beta=0.5 > \gamma = 5$ does not imply that more importance is given to $\gamma$; however, choosing $\beta=0.5$ instead of $\beta=0.1$ does. We choose $\beta = \{0.1, 0.3, 0.5\}$, $\gamma = \{0.1, 0.5, 1, 5, 10\}$ and $\delta = \{1, 5, 10, 20, 30\}$.  The y-axis of Fig. \ref{fig:num_changes_best_x_500-T_5000} shows the average number of preparations per slot, for the last 100 slots; notice that, due to considering 20 UEs and 10 cells, this number cannot exceed 200.

For a specific $\gamma$, the effect of $\beta$ and $\delta$ on the number of prepared cells depends on the interplay of these parameters and should be carefully examined. If $\gamma$, the penalty for not preparing the best cell, is small (e.g., $\gamma = 0.1$) and $\beta$ is high (e.g., $\beta = 0.5$ so fewer cells can be prepared), then as $\delta$ increases, the switching cost becomes more significant, making it less beneficial to prepare different cells to determine the best one. Consequently, the number of prepared cells \textit{decreases}; e.g., setting $\beta = 0.5$ and $\gamma=0.5$, we observe 24 and 19 preparations on average for $\delta=1$ vs $\delta=30$, respectively. Conversely, if $\gamma$ is large, the penalty for not preparing the best cell is substantial, and as $\delta$ grows and switching becomes more costly, the algorithm decides to keep more cells prepared for more slots. As a result, the number of prepared cells \textit{increases} even up to $\times$2 times, as we can see for $\beta = 0.3$, $\gamma=10$, and $\delta=1$ vs $\delta=30$.

For a fixed $\delta$, the number of prepared cells \textit{increases} with $\gamma$, as higher $\gamma$ imposes a greater penalty for not preparing the highest-SINR cell. To mitigate this penalty, the algorithm prepares more cells to ensure that the best is found. This trend is consistent for all three $\beta$ values considered, although a higher $\beta$ limits the number of cells other than the highest-SINR that can be prepared, leading to differences in absolute values. For instance, our algorithm makes approximately 200 preparations on average per slot in Fig. \ref{fig:num_changes_best_x_500-beta_0.1-T_5000}, meaning that all cells for all users are prepared in each slot, while we observe at most 29 preparations per slot (i.e., 1--2 cells per UE) in Fig. \ref{fig:num_changes_best_x_500-beta_0.5-T_5000}; for that reason, lower values than $\beta=0.1$ or higher than $\beta=0.5$ are not considered in the evaluation. We underline that the interplay of these parameters also changes significantly with the volatility/stationarity of the conditions (i.e., SINR).

\begin{figure}[!t]
    \centering
    \subfigure[]{\label{fig:regret_num_changes_best_x_500-beta_0.5-T_5000}\includegraphics[scale=0.52]{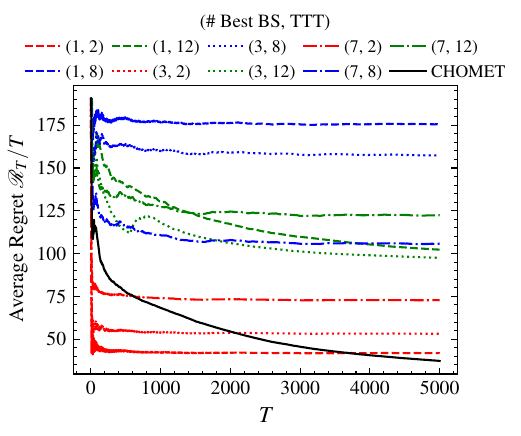}}	
    \subfigure[]{\label{fig:f_total_num_changes_best_x_500-beta_0.5-T_5000}\includegraphics[scale=0.52]{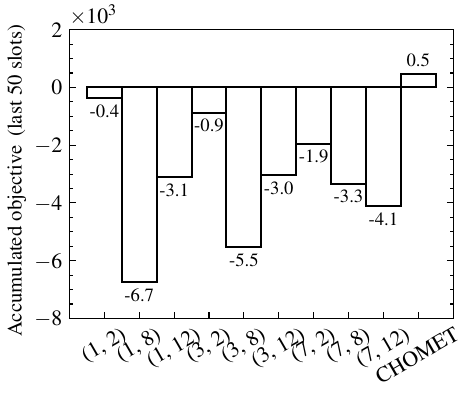}}
    \vspace{-5mm}
    \caption{Volatile scenario (SINR changes ever 10 slots) for $\beta_t = 0.5, \gamma_t = 10, \text{and } \delta_t = 5, \forall t \in \mathcal T$ with $T=5k$ slots: (a) average dynamic regret and (b) total objective values for the last 50 slots, of \texttt{CHOMET} and benchmarks.}
    \vspace{-7mm}
    \label{fig:regret-f_total_num_changes_best_x_500-beta_0.5-T_5000}
\end{figure}

In the volatile case, Fig. \ref{fig:regret_num_changes_best_x_500-beta_0.5-T_5000} shows the average dynamic regret of our proposed algorithm and the 3GPP-compliant competitors, with the former surpassing all \texttt{(q, r)}, for $q=\{1, 3, 7\}$ and $r=\{8, 12\}$ (blue and green lines) by up to 375\% in slot $t=5k$. Even though at first glace competitors \texttt{(1, 2)}, \texttt{(3, 2)} and \texttt{(7, 2)} seem to have comparable performance with \texttt{CHOMET}, we underline that their average dynamic regret stays almost constant for all slots (``stuck'' in sub-optimal decisions). This claim can be verified from Fig. \ref{fig:f_total_num_changes_best_x_500-beta_0.5-T_5000}, where the accumulated objective in the last 50 slots of the best comparator, namely \texttt{(1, 2)}, is 180\% less than \texttt{CHOMET}. Lastly, in a stationary (almost static) case, such as this of Fig. \ref{fig:regret-f_total_num_changes_best_x_5-beta_0.5-T_5000} where SINR changes very slowly (5 times in the total $T=3k$), comparators \texttt{(1, 2)}, \texttt{(1, 8)} and \texttt{(1, 12)} behave similarly to \texttt{CHOMET} in terms of average regret and total objective values. This is reasonable, as preparing only the single highest-SINR cell (which is the one that the user is allocated) is the best policy to follow when conditions stay the same. Therefore, we verify with different scenarios that our algorithm, \texttt{CHOMET}, is adaptive to both volatile and stationary/static cases and provably approaches the behavior of an omniscient benchmark.

\section{Related Work}\label{sec:related_work}

CHO is a novel solution included in 3GPP Release 16 that focuses on mobility robustness by reducing the HO delay and the increased number of HOFs that traditional HOs are facing \cite{3gpp_38_300, survey_CHO_alraih2023, survey_cho_haghrah2023}; kindly refer to \cite{kalntis_imc24, kalntis_infocom25} for a comprehensive literature review of traditional HOs. Recently, CHO has been shown to be successful in several scenarios, e.g., in non-terrestrial networks \cite{leo_juan22, leo_saglam23, leo_yang24}, for integrated access and backhaul, and in 5G NR-unlicensed \cite{stanczak22}, as well as in beamforming and contention-free random access \cite{iqbal_beamforming23, stanczak23_cfra}. In \cite{martikainen18}, by tweaking the CHO thresholds (also studied in \cite{samsung_cho24}) for preparing cells and executing HO towards a prepared cell, the authors discover that CHOs offer mobility robustness (more than 60\% reduced failures); however, improper parametrization could lead to increased signaling. Similarly, \cite{stanczak22} discusses the basics of this new type of HO and reports that UEs with velocities greater than 30km/h benefit the most from CHO, exhibiting 3--4 times lower failures than UEs with no CHO enabled (i.e., traditional HOs). However, these works offer either solely descriptive and simulation-based insights or propose solutions without performance guarantees. 

\begin{figure}[!t]
    \centering
    \subfigure[]{\label{fig:regret_num_changes_best_x_5-beta_0.5-T_3000}\includegraphics[scale=0.52]{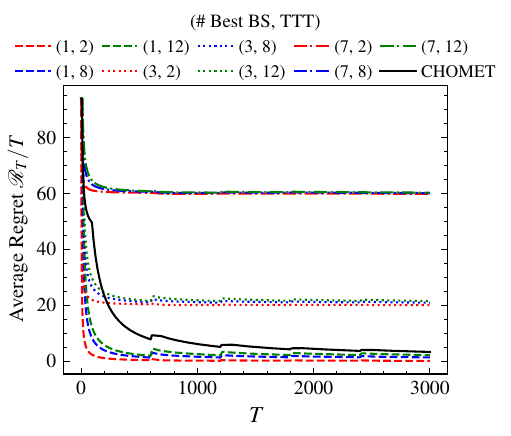}}	
    \subfigure[]{\label{fig:f_total_num_changes_best_x_5-beta_0.5-T_3000}\includegraphics[scale=0.52]{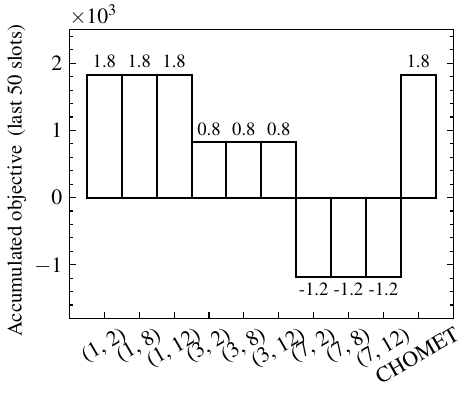}}
    \vspace{-2mm}
    \caption{Stationary scenario (SINR changes every 600 slots) for $\beta_t\!=\!0.5, \gamma_t \!=\! 10, \text{and } \delta_t \!=\! 5, \forall t \!\in\! \mathcal T$ with $T\!=\!3k$ slots: (a) average dynamic regret and (b) total objective values for the last 50 slots, of \texttt{CHOMET} and benchmarks.}
    \vspace{-5mm}
    \label{fig:regret-f_total_num_changes_best_x_5-beta_0.5-T_5000}
\end{figure}

In \cite{fiandrino23, stanczak23_history}, historical HO data (e.g., the likelihood of each cell becoming the next serving cell for a given UE) are used, and the cells to prepare for each user are determined based on past HO frequencies. To enhance CHOs, trajectory prediction is used in \cite{prado_echo21}, where the controller decides which cells should be prepared/released based on the predicted distance between the UEs and the cells. Other prediction-based CHO schemes take as input the signal quality of each user at different slots and rely on Machine/Deep Learning (ML/DL) models to either predict the probabilities of the user's connected (i.e., associated) cell in the next time slot \cite{lee_DL20}, or the future signal measurements \cite{park24}. However, these approaches rely on static historical data, which do not account for changes in user mobility patterns over hours/days \cite{kalntis_imc24}, or require frequent reconfiguration to adapt to such changes, limiting their flexibility in dynamic environments. 

Moreover, the authors of \cite{iqbal22, iqbal23_FCHO_hand_blockage} study the benefits of fast (i.e., consecutive) CHO proposed in \cite{3gpp_consecutive_CHO} to facilitate the use of previously prepared cells and avoid the unnecessary signaling to prepare the same cells, after (any) successful HO. In our work, we introduce tweakable parameters based on the preferences of the MNOs that determine the signaling cost weights, allowing flexibility in cell preparation decisions. Higher signaling weights penalize heavily the frequent preparations, capturing the idea of \cite{3gpp_consecutive_CHO, iqbal22, iqbal23_FCHO_hand_blockage}, while lower weights allow them to occur often (i.e., even after any successful HO) \cite{3gpp_38_300}. Similarly to our work, the authors of \cite{prado_CHOopt23} form an optimization problem to decide which cells to prepare for each user in order to minimize CHO costs. Our study differs from this work, as $(i)$ we learn on the fly the parameters and do not require offline training, and $(ii)$ our perturbation model is an adversarial one, where the various random parameters can even be selected by an attacker; still, all results hold.



\section{Conclusions}\label{sec:conclusions}

This study focuses on optimizing Conditional Handovers (CHOs), a novel type of handovers (HOs) introduced by 3GPP to mitigate the issues of traditional HOs, using Smooth Online Learning (SOL). We develop a dynamic meta-learning algorithm that maximizes the users' rate, while at the same time minimizing the signaling needed for cell preparations and the waste of resources, using tunable parameters for flexible and customizable prioritization of these criteria. Our proposed algorithm, \texttt{CHOMET}, aligned with the O-RAN paradigm, demonstrates robust theoretical performance guarantees even in challenging environments, as exemplified in its evaluation against the currently deployed 3GPP-compliant algorithms. Future works will focus on reducing the discretization error and validating the algorithm in real-world scenarios.
\section*{Acknowledgments}
We sincerely thank Naram Mhaisen and Dr. Andra Lutu for the insightful discussions. This work has been supported by the National Growth Fund through the Dutch 6G flagship project ``Future Network Services'', and the European Commission through Grant No. 101139270 (ORIGAMI) and 101192462 (FLECON-6G).

\appendices
\ifCLASSOPTIONcaptionsoff
  \newpage
\fi
\bibliography{references_V2.bib}
\bibliographystyle{IEEEtran}


\end{document}